\def\eqref#1{equation~\ref{#1}}
\def\1{\bm{1}}
\def\eps{{\epsilon}}
\def\va{{\bm{a}}}
\def\vd{{\bm{d}}}
\def\vk{{\bm{k}}}
\def\vr{{\bm{r}}}
\def\vs{{\bm{s}}}
\def\vu{{\bm{u}}}
\def\vx{{\bm{x}}}
\def\vz{{\bm{z}}}
\DeclareMathAlphabet{\mathsfit}{\encodingdefault}{\sfdefault}{m}{sl}
\SetMathAlphabet{\mathsfit}{bold}{\encodingdefault}{\sfdefault}{bx}{n}
\newcommand{\KL}{D_{\mathrm{KL}}}
\newtheorem{thm}{Theorem}[section]
\begin{document}


\title{Variational Item Response Theory:\\ Fast, Accurate, and Expressive}
\numberofauthors{5}
\author{
    Mike Wu$^{\blacklozenge}$, Richard L. Davis$^{\lozenge}$, Benjamin W. Domingue$^{\lozenge}$, Chris Piech$^{\blacklozenge}$, Noah Goodman$^{\blacklozenge,\blacksquare}$ \\
    \affaddr{Department of Computer Science ($\blacklozenge$), Education ($\lozenge$), and Psychology ($\blacksquare$)} \\
    \affaddr{Stanford University} \\
    \email{\texttt{\{wumike,rldavis,bdomingu,cpiech,ngoodman\}@stanford.edu}}
}

\maketitle

\begin{abstract}
Item Response Theory (IRT) is a ubiquitous model for understanding humans based on their responses to questions, used in fields as diverse as education, medicine and psychology.
Large modern datasets offer opportunities to capture more nuances in human behavior, potentially improving test scoring and better informing public policy.
Yet larger datasets pose a difficult speed / accuracy challenge to contemporary algorithms for fitting IRT models.
We introduce a variational Bayesian inference algorithm for IRT, and show that it is fast and scaleable without sacrificing accuracy.
Using this inference approach we then extend classic IRT with expressive Bayesian models of responses.
Applying this method to five large-scale item response datasets from cognitive science and education yields higher log likelihoods and improvements in imputing missing data.
The algorithm implementation is open-source, and easily usable.
\end{abstract}

\section{Introduction}

The task of estimating human ability from stochastic responses to a series of questions has been studied since the 1950s in thousands of papers spanning several fields.
The standard statistical model for this problem, Item Response Theory (IRT), is used every day around the world, in many critical contexts including college admissions tests, school-system assessment, survey analysis, popular questionnaires, and medical diagnosis.

As datasets become larger, new challenges and opportunities for improving IRT models present themselves.
On the one hand, massive datasets offer the opportunity to better understand human behavior, fitting more expressive models.
On the other hand, the algorithms that work for fitting small datasets often become intractable for larger data sizes.
Indeed, despite a large body of literature, contemporary IRT methods fall short -- it remains surprisingly difficult to estimate human ability from stochastic responses.
One crucial bottleneck is that the most accurate, state-of-the-art Bayesian inference algorithms are prohibitively slow, while faster algorithms (such as the popular maximum marginal likelihood estimators) are less accurate and poorly capture uncertainty.
This leaves practitioners with a choice: either have nuanced Bayesian models with appropriate inference or have timely computation.

In the field of artificial intelligence, a revolution in deep generative models via \emph{variational inference} \cite{kingma2013auto,rezende2014stochastic} has demonstrated an impressive ability to perform fast inference for complex Bayesian models.
In this paper, we present a novel application of variational inference to IRT, validate the resulting algorithms with synthetic datasets, and apply them to real world datasets.
We then show that this inference approach allows us to extend classic IRT response models with deep neural network components. We find that these more flexible models better fit the large real world datasets.
Specifically, our contributions are as follows:
\begin{enumerate}

    \item \textbf{Variational inference for IRT:} We derive a new optimization objective --- the Variational Item response theory Lower Bound, or VIBO --- to perform inference in IRT models.
    By learning a mapping from responses to posterior distributions over ability and item characteristics, VIBO is ``amortized" to solve inference queries efficiently.

    \item \textbf{Faster inference:}
    We find VIBO to be much faster than previous Bayesian techniques and usable on much larger datasets without loss in accuracy.

    \item \textbf{More expressive:} Our inference approach is naturally compatible with deep generative models and, as such, we enable the novel extension of Bayesian IRT models to use neural-network-based representations for inputs, predictions, and student ability. We develop the first deep generative IRT models.

    \item \textbf{Simple code:} Using our VIBO python package is only a few lines of code that is easy to extend.

    \item \textbf{Real world application: } We demonstrate the impact of faster inference and expressive models by applying our algorithms to datasets including: PISA, DuoLingo and Gradescope. We achieve up to 200 times speedup and show improved accuracy at imputing hidden responses.
    At scale, these improvements in efficiency save hundreds of hours of computation.
\end{enumerate}


\section{Background}

We briefly review several variations of item response theory and the fundamental principles of approximate Bayesian inference, focusing on modern variational inference.

\subsection{Item Response Theory}

Imagine answering a series of multiple choice questions.
For example, consider a personality survey, a homework assignment, or a school entrance examination.
Selecting a response to each question is an interaction between your ``ability'' (knowledge or features) and the characteristics of the question, such as its difficulty.
The goal in examination analysis is to gauge this unknown ability of each student and the unknown item characteristics based only on responses.
Early procedures \cite{edgeworth1888statistics}
defaulted to very simple methods, such as counting the number of correct responses, which ignore differences in question quality.
In reality, we understand that not all questions are created equal: some may be hard to understand while others may test more difficult concepts.
To capture these nuances, Item Response Theory (IRT) was developed as a mathematical framework to reason jointly about people's ability and the items themselves.

The IRT model plays an impactful role in many large institutions.
It is the preferred method for estimating ability in several state assessments in the United States, for international assessments gauging educational competency across countries \cite{harlen2001assessment}, and for the National Assessment of Educational Programs (NAEP), a large-scale measurement of literacy and reading comprehension in the US \cite{ravitch1995national}.
Beyond education, IRT is a method widely used in cognitive science and psychology, for instance with regards to studies of language acquisition and development \cite{hartshorne2018critical,magdalena2016ratings,frank2017wordbank,braginsky2015developmental}.


\begin{figure}[h!]
    \centering
    \begin{subfigure}[b]{0.3\linewidth}
        \includegraphics[width=\linewidth]{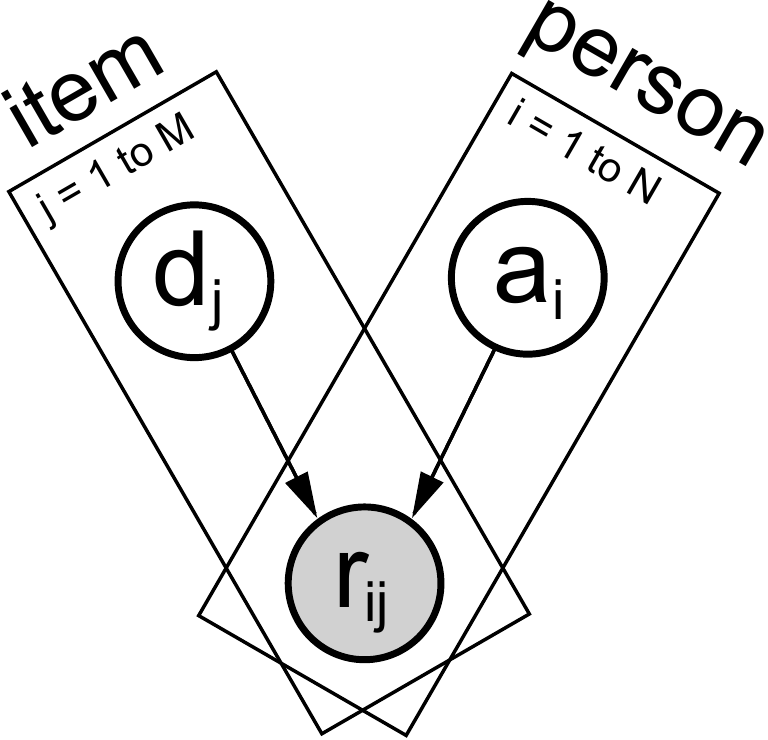}
        \caption{}
    \end{subfigure}
    \begin{subfigure}[b]{0.3\linewidth}
        \includegraphics[width=\linewidth]{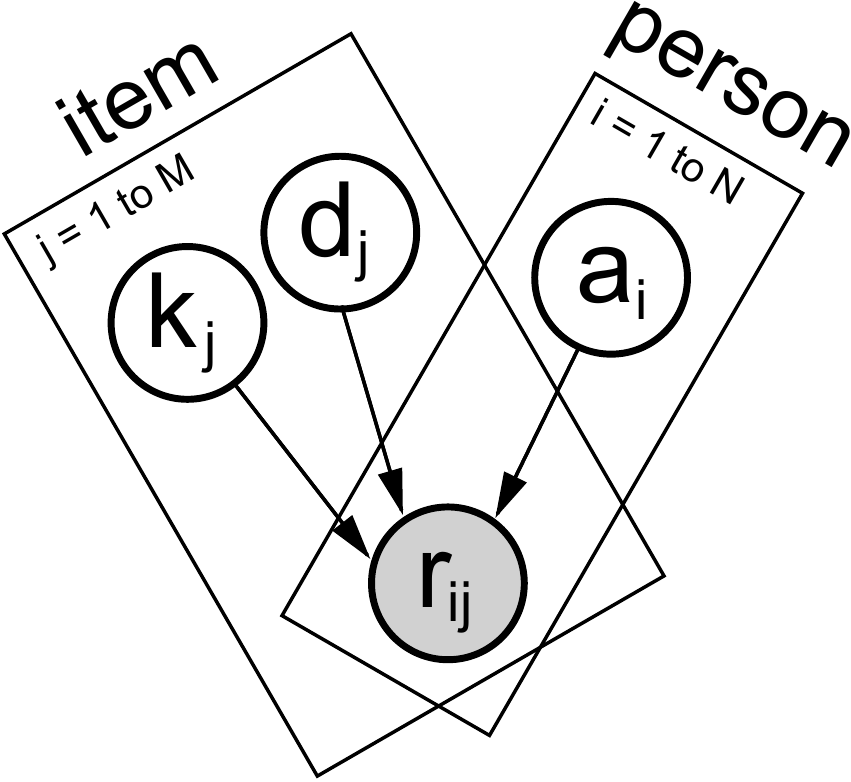}
        \caption{}
    \end{subfigure}
    \begin{subfigure}[b]{0.3\linewidth}
        \includegraphics[width=\linewidth]{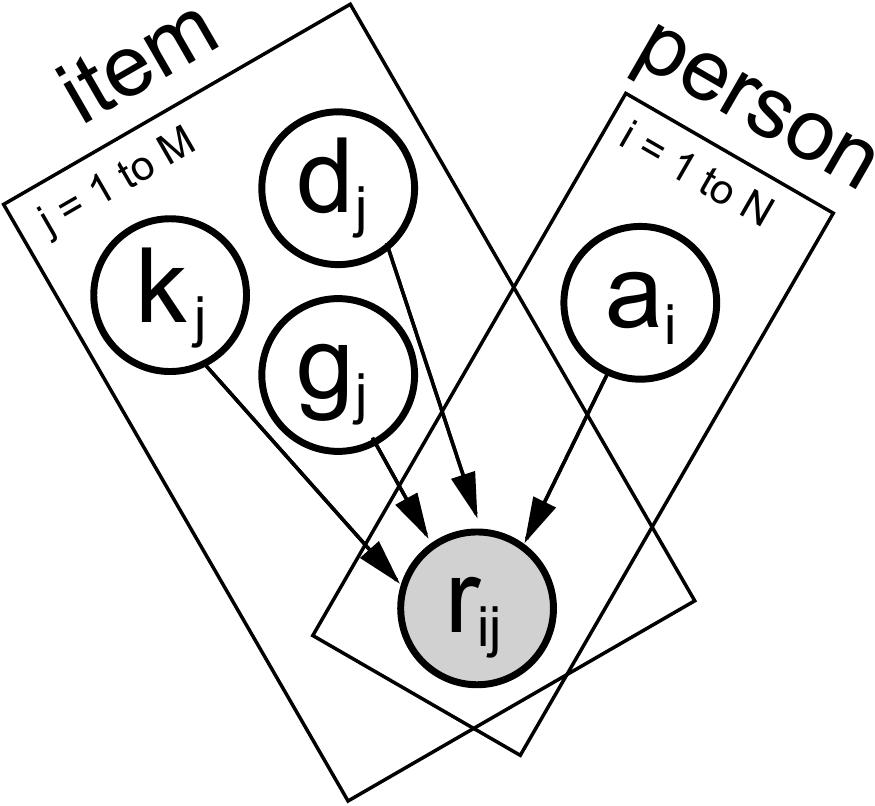}
        \caption{}
    \end{subfigure}
    \caption{Graphical models for the (a) 1PL, (b) 2PL, and (c) 3PL Item Response Theories. Observed variables are shaded. Arrows represent dependency between random variables and each rectangle represents a plate (i.e.~repeated observations).}
    \label{fig:irt_graph}
\end{figure}
IRT has many forms; we review the most standard (Fig.~\ref{fig:irt_graph}).
The simplest class of IRT summarizes the ability of a person with a single parameter.
This class contains three versions: 1PL, 2PL, and 3PL IRT, each of which differ by the number of free variables used to characterize an item.
The 1PL IRT model, also called the Rasch model \cite{rasch1960studies}, is given in Eq.~\ref{eq:1pl_irt},
\begin{equation}
    p(r_{i,j}=1|a_i,d_j) = \frac{1}{1 + e^{-(a_i - d_j)}}
\label{eq:1pl_irt}
\end{equation}
where $r_{i,j}$ is the response by the $i$-th person to the $j$-th item.
There are $N$ people and $M$ items in total.
Each item in the 1PL model is characterized by a single number representing difficulty, $d_j$.
As the 1PL model is equivalent to a logistic function, a higher difficulty requires a higher ability in order to respond correctly.
Next, the 2PL IRT model\footnote{We default to 2PL as the  pseudo-guessing parameter introduces several invariances in the model. This requires far more data to infer ability accurately, as measured by our own synthetic experiments.
For practitioners, we warn against using 3PL for small to medium datasets.} adds a \textit{discrimination} parameter, $k_j$ for each item that controls the slope (or scale) of the logistic curve.
We can expect items with higher discrimination to more quickly separate people of low and high ability.
The 3PL IRT model further adds a \textit{pseudo-guessing} parameter, $g_j$ for each item that sets the asymptotic minimum of the logistic curve.
We can interpret pseudo-guessing as the probability of success if the respondent were to make a reasonable guess on an item.
The 2PL and 3PL IRT models are:
\begin{equation}
   p(r_{i,j}|a_i, \vd_{j}) = \frac{1}{1 + e^{-k_j a_i - d_j}}\enspace \textup{or}\enspace  g_j + \frac{1 - g_j}{1 + e^{-k_j a_i - d_j}}
\label{eq:2and3pl_irt}
\end{equation}
where $\vd_j = \{k_j, d_j\}$ for 2PL and $\vd_j = \{g_j, k_j, d_j\}$ for 3PL.
See Fig.~\ref{fig:irt_graph} for graphical models of each of these IRT models.

A single ability dimension is sometimes insufficient to capture the relevant variation in human responses.
For instance, if we are measuring a person's understanding on elementary arithmetic, then a single dimension may suffice in capturing the majority of the variance.
However, if we are instead measuring a person's general mathematics ability, a single real number no longer seems sufficient.
Even if we bound the domain to middle school mathematics, there are several factors that contribute to ``mathematical understanding" (e.g. proficiency in algebra versus geometry).
Summarizing a person with a single number in this setting would result in a fairly loose approximation.
For cases where multiple facets of ability contribute to performance, we consider \textit{multidimensional} item response theory \cite{ackerman1994using,reckase2009multidimensional,mcdonald2000basis}.
We focus on 2PL multidimensional IRT (MIRT):
\begin{equation}
    p(r_{i,j}=1|\va_i,\vk_j,d_j) = \frac{1}{1 + e^{-\mathbf{a}_i^T \vk_j - d_j}}
\label{eq:3pl_mirt}
\end{equation}
where we use bolded notation $\va_i = (a^{(1)}_i, a^{(2)}_i, \ldots a^{(K)}_i)$ to represent a $K$ dimensional vector.
Notice that the item discrimination becomes a vector of equal size to ability.


In practice, given a (possibly incomplete) $N \times M$ matrix of observed responses, we want to \textit{infer} the ability of all $N$ people and the characteristics of all $M$ items.
Next, we provide a brief overview of inference in IRT.

\subsection{Inference in Item Response Theory}

We compare and contrast the three popular methods used to perform inference for IRT in research and industry. Inference algorithms are critical for item response theory as slow or inaccurate algorithms prevent the use of appropriate models.

\textbf{Maximum Likelihood Estimation}$\quad$
A straightforward approach is to pick the most likely ability and item features given the observed responses.
To do so we optimize:
\begin{align}
    \mathcal{L}_{\textup{MLE}} &= \max_{\{\va_i\}_{i=1}^N, \{\vd_j\}_{j=1}^M} \sum_{i=1}^N \sum_{j=1}^M \log p(r_{ij}|\va_i, \vd_j)
    \label{eqn:jmle}
\end{align}
with stochastic gradient descent (SGD). The symbol $\vd_j$ represents all item features e.g. $\vd_j = \{ d_j, \vk_j \}$ for 2PL.
Eq.~\ref{eqn:jmle} is often called the Joint Maximum Likelihood Estimator \cite{beguin2001mcmc,embretson2013item}, abbreviated MLE.
MLE poses inference as a supervised regression problem in which we choose the most likely unknown variables to match known dependent variables.
While MLE is simple to understand and implement, it lacks any measure of uncertainty; this can have important consequences especially when responses are missing.

\textbf{Expectation Maximization}$\quad$
Several papers have pointed out that when using MLE, the number of unknown parameters increases with the number of people \cite{bock1981marginal,haberman1977maximum}.
In particular, \cite{haberman1977maximum} shows that in practical settings with a finite number of items, standard convergence theorems do not hold for MLE as the number of people grows.
To remedy this, the authors instead treat ability as a nuisance parameter and marginalized it out \cite{bock1981marginal, bock1988full}.
Brock et. al. introduces an Expectation-Maximization (EM) \cite{dempster1977maximum} algorithm to iterate between (1) updating beliefs about item characteristics and (2) using the updated beliefs to define a marginal distribution (without ability) $p(r_{ij}|\vd_{j})$ by numerical integration of $\va_i$.
Appropriately, this algorithm is referred to as Maximum Marginal Likelihood Estimation, which we abbreviate as EM.
Eq.~\ref{eqn:em} shows the E and M steps for EM.
\begin{align}
    \textup{E step}:&\quad p(r_{ij}|\vd^{(t)}_j) = \int_{\va_i} p(r_{ij}|\va_i, \vd^{(t)}_j)p(\va_i) d \va_i \\
    \textup{M step}:&\quad \vd_j^{(t+1)} = \arg\max_{\vd_j} \sum_{i=1}^N \log p(r_{ij} |\vd^{(t)}_j)
    \label{eqn:em}
\end{align}
where $(t)$ represents the iteration count.
We often choose $p(\va_i)$ to be a simple prior distribution like standard Normal.
In general, the integral in the E-step is intractable:
EM uses a Gaussian-Hermite quadrature to discretely approximate $p(r_{ij}|\vd^{(t)}_j)$.
See \cite{harwell1988item} for a closed form expression for $\vd_j^{(t+1)}$ in the M step.
This method finds the maximum a posteriori (MAP) estimate for item characteristics.
EM does not infer ability as it is ``ignored" in the model: the common workaround is to use EM to infer item characteristics, then fit ability using a second auxiliary model.
In practice, EM has grown to be ubiquitous in industry as it is incredibly fast for small to medium sized datasets.
However, we expect that EM may scale poorly to large datasets and higher dimensions as numerical integration requires far more points to properly measure a high dimensional volume.

\textbf{Hamiltonian Monte Carlo}$\quad$
The two inference methods above give only point estimates for ability and item characteristics.
In contrast Bayesian approaches seek to capture the true posterior over ability and item characteristics given observed responses, $p(\va_i,\vd_{1:M}|\vr_{i,1:M})$ where $\vr_{i,1:M} = (r_{i,1}, \cdots, r_{i,M})$.
Doing so provides estimates of uncertainty and characterizes features of the joint distribution that cannot be represented in point estimates, such as multimodality and parameter correlation.
In practice, this can be very useful for a holistic and robust understanding of student ability.

The common technique for Bayesian estimation in IRT uses Markov Chain Monte Carlo (MCMC) \cite{hastings1970monte,gelfand1990sampling} to draw samples from the posterior by constructing a Markov chain carefully designed such that $p(\va_i,\vd_{1:M}|\vr_{i,1:M})$ is the equilibrium distribution.
By running the chain longer, we can closely match the distribution of drawn samples to the true posterior.
Hamiltonian Monte Carlo (HMC) \cite{neal2011mcmc,neal1994improved,hoffman2014no} is an efficient version of MCMC for continuous state spaces.
We recommend \cite{hoffman2014no} for a good review of HMC.

The strength of this approach is that the samples generated capture the true posterior (if the algorithm is run long enough).
However the computational costs for MCMC can be very high, and the cost scales at least linearly with the number of latent parameters --- which for IRT is directly related to data size.
With new datasets of millions of observations, such limitations can be debilitating.
Fortunately, there exist a second class of approximate Bayesian techniques that have gained significant attention lately in the machine learning community.
Next, we provide a careful review of \textit{variational inference}.

\subsection{Variational Methods}
The main intuition of variational inference (VI) is to treat inference as an optimization problem: starting with a family of distributions, the goal is to pick the one that best approximates the true posterior, by minimizing an estimate of the mismatch between true and approximate distributions.
We will first describe VI in the general context of a latent variable model, and then will apply VI to IRT.

Let $\vx \in \mathcal{X}$ and $\vz \in \mathcal{Z}$ represent observed and latent variables, respectively.
In VI \cite{jordan1999introduction,wainwright2008graphical,blei2017variational}, we introduce a family of tractable distributions
over $\vz$ (such that we can easily sample and score).
We wish to find the member $q_{\psi^*(\vx)} \in \mathcal{Q}$ that minimizes the Kullback-Leibler (KL) divergence between itself and the true posterior:
\begin{equation}
q_{\psi^*(\vx)}(\vz) = \arg \min_{q_{\psi(\vx)}} \KL(q_{\psi(\vx)}(\vz) || p(\vz|\vx))
\label{eq:vi:unamortized}
\end{equation}
where $\psi(\vx)$ are parameters that define each distribution.
For example, $\psi(\vx)$ would be the mean and scale for a Gaussian distribution.
Since the ``best" approximate posterior $q_{\psi^*(\vx)}$ depends on the observed variables, its parameters have $\vx$ as a dependent variable.
To be clear, there is one approximate posterior for every possible value of the observed variables.

Frequently, we need to do inference for many different values of the observed variables $\vx$. Let $p_{\mathcal{D}}(\vx)$ be an empirical distribution over the observed variables, which is equivalent to the marginal $p(\vx)$ if the generative model is correctly specified. Then, the average quality of the variational approximations is measured by
\begin{equation}
    \mathbb{E}_{p_{\mathcal{D}}(\vx)}\left[ \max_{\psi(\vx)} \mathbb{E}_{q_{\psi(\vx)}(\vz)}\left[\log \frac{p(\vx,\vz)}{q_{\psi(\vx)}(\vz)}\right] \right]
    \label{eq:elbo}
\end{equation}
In practice, $p_{\mathcal{D}}(\vx)$ is unknown but we assume access to a dataset $\mathcal{D}$ of examples i.i.d. sampled from $p_{\mathcal{D}}(\vx)$; this is sufficient to evaluate Eq.~\ref{eq:elbo}.

\textbf{Amortization}$\quad$
As in Eq.~\ref{eq:elbo}, we must learn an approximate posterior for each $\vx \in \mathcal{D}$.
For a large dataset $\mathcal{D}$, this can quickly grow to be unwieldly.
One such solution to this scalability problem is \textit{amortization} \cite{gershman2014amortized}, which reframes the per-observation optimization problem as a supervised regression task.
Consider learning a single deterministic mapping $f_\phi: \mathcal{X} \rightarrow \mathcal{Q}$ to predict $\psi^*(\vx)$ or equivalently $q_{\psi^*(\vx)} \in \mathcal{Q}$ as a function of the observation $\vx$.
Often, we choose $f_\phi$ to be a conditional distribution, denoted by $q_\phi(\vz|\vx) = f_\phi(\vx)(\vz)$.

The benefit of amortization is a large reduction in computational cost: the number of parameters is vastly smaller than learning a per-observation posterior.
Additionally, if we manage to learn a good regressor, then the amortized approximate posterior $q_\phi(\vz|\vx)$ could generalize to new observations $\vx \not \in \mathcal{D}$ unseen in training.
This strength has made amortized VI popular with modern latent variable models, such as the Variational Autoencoder \cite{kingma2013auto}.

Instead of Eq.~\ref{eq:elbo}, we now optimize:
\begin{equation}
\max_\phi \mathbb{E}_{p_{\mathcal{D}}(\vx)}\left[ \mathbb{E}_{q_\phi(\vz|\vx)}\left[\log \frac{p(\vx,\vz)}{q_\phi(\vz|\vx)}\right] \right]
    \label{eq:elbo_amortize}
\end{equation}
The drawback of this approach is that it introduces an \textit{amortization gap}: since we are technically using a less flexible family of approximate distributions, the quality of approximate posteriors can be inferior.

\textbf{Model Learning}$\quad$
So far we have assumed a fixed generative model $p(\vx, \vz)$.
However, often we can only specify a family of possible models $p_\theta(\vx|\vz)$ parameterized by $\theta$.
The symmetric challenge (to approximate inference) is to choose $\theta$ whose model best explains the evidence.
Naturally, we do so by maximizing the log marginal likelihood of the data
\begin{equation}
    \log p_\theta(\vx) = \log \int_{\vz} p_\theta(\vx,\vz) d\vz
\end{equation}
Using Eq.~\ref{eq:elbo_amortize}, we derive the Evidence Lower Bound (ELBO) \cite{kingma2013auto,rezende2014stochastic} with $q_\phi(\vz|\vx)$ as our inference model
\begin{equation}
    \log p_\theta(\vx) \geq \mathbb{E}_{q_\phi(\vz|\vx)}\left[ \log \frac{p_\theta(\vx,\vz)}{q_\phi(\vz|\vx)} \right] \triangleq \textup{ELBO}
    \label{eq:elbo:vae}
\end{equation}
We can jointly optimize $\phi$ and $\theta$ to maximize the ELBO.
We have the option to parameterize $p_\theta(\vx|\vz)$ and $q_\phi(\vz|\vx)$ with deep neural networks, as is common with the VAE \cite{kingma2013auto}, yielding an extremely flexible space of distributions.

\textbf{Stochastic Gradient Estimation}$\quad$
The gradients of the ELBO (Eq.~\ref{eq:elbo:vae}) with respect to $\phi$ and $\theta$ are:
\begin{align}
    \nabla_\theta \textup{ELBO} &= \mathbb{E}_{q_\phi(\vz|\vx)}[\nabla_\theta \log p_\theta(\vx,\vz)]]\label{eq:grad:theta} \\
    \nabla_\phi \textup{ELBO} &= \nabla_\phi \mathbb{E}_{q_\phi(\vz|\vx)}[ \log p_\theta(\vx,\vz)] \label{eq:grad:phi}
\end{align}
Eq.~\ref{eq:grad:theta} can be estimated using Monte Carlo samples.
However, as it stands, Eq.~\ref{eq:grad:phi} is difficult to estimate as we cannot distribute the gradient inside the inner expectation.
For certain families $\mathcal{Q}$, we can use a reparameterization trick.

\textbf{Reparameterization Estimators}$\quad$
Reparameterization is the technique of removing sampling from the gradient computation graph \cite{kingma2013auto,rezende2014stochastic}.
In particular, if we can reduce sampling $\vz \sim q_\phi(\vz|\vx)$ to sampling from a parameter-free distribution $\eps \sim p(\eps)$ plus a deterministic function application, $\vz = g_\phi(\eps)$, then we may rewrite Eq.~\ref{eq:grad:phi} as:
\begin{equation}
\nabla_\phi \textup{ELBO} = \mathbb{E}_{p(\eps)} [ \nabla_{\vz} \log \frac{p_\theta(\vx,\vz(\eps))}{q_\phi(\vz(\eps)|\vs)} \nabla_\phi g_\phi(\eps) ]
\label{eq:grad:elbo}
\end{equation}
which now can be estimated efficiently by Monte Carlo (the gradient is inside the expectation).
A benefit of reparameterization over alternative estimators (e.g. score estimator \cite{mnih2014neural} or REINFORCE \cite{williams1992simple}) is lower variance while remaining unbiased.
A common example is if $q_\phi(\vz|\vx)$ is Gaussian $\mathcal{N}(\mu,\sigma^2)$ and we choose $p(\eps)$ to be $\mathcal{N}(0, 1)$, then $g(\eps) = \eps * \sigma + \mu$.

\section{The VIBO Algorithm}
\label{sec:methods}
Having reviewed the major principles of VI, we will adapt them to IRT.
We call the resulting algorithm VIBO since it is a \textbf{V}ariational approach for \textbf{I}tem response theory based on a novel lower \textbf{BO}und.
We state and prove VIBO in the following theorem.
\begin{thm}
    Let $\va_i$ be the ability for person $i \in [1, N]$ and $\vd_{j}$ be the characteristics for item $j \in [1, M]$. We use the shorthand notation $\vd_{1:M} = (\vd_1, \ldots, \vd_M)$. Let $r_{i,j}$ be the binary response for item $j$ by person $i$. We write $\vr_{i, 1:M} = (r_{i,1}, \ldots r_{i,M})$.
    If we define the VIBO objective as:
    \begin{equation*}
        \textup{VIBO} \triangleq \mathcal{L}_{\text{recon}} + \mathbb{E}_{q_\phi(\vd_{1:M}|\vr_{i,1:M})}[D_{\text{ability}}] +D_{\text{item}}
    \end{equation*}
    where
    \begin{align*}
        \mathcal{L}_{\text{recon}} &= \mathbb{E}_{q_\phi(\va_i, \vd_{1:M}|\vr_{i,1:M})}\left[ \log p_\theta(\vr_{i,1:M}|\va_i, \vd_{1:M}) \right] \\
        D_{\text{ability}} &= \KL(q_\phi(\va_i|\vd_{1:M},\vr_{i,1:M}) || p(\vu_i)) \\
        D_{\text{item}} &= \KL(q_\phi(\vd_{1:M}|\vr_{i,1:M})||p(\vd_{1:M}))
    \end{align*}
    and assume the joint posterior factors as follows
    \begin{equation*}
        q_\phi(\va_i, \vd_{1:M}|\vr_{i,1:M}) = q_\phi(\va_i|\vd_{1:M},\vr_{i,1:M})q_\phi(\vd_{1:M}|\vr_{i,1:M})
    \end{equation*}
    then $\log p(\vr_{i,1:M}) \geq \textup{VIBO}$. In othe words, VIBO is a lower bound on the log marginal probability of person $i$'s responses.
    \label{thm:virtu}
\end{thm}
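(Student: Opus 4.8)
The plan is to derive VIBO exactly as one derives an evidence lower bound, exploiting the nonnegativity of the KL divergence between the factorized variational posterior and the true joint posterior over the latent pair $(\va_i, \vd_{1:M})$. First I would write the log marginal likelihood as $\log p(\vr_{i,1:M}) = \log \int p(\vr_{i,1:M}, \va_i, \vd_{1:M}) \, d\va_i \, d\vd_{1:M}$, insert the variational posterior $q_\phi(\va_i, \vd_{1:M}|\vr_{i,1:M})$ by multiplying and dividing inside the integral, and apply Jensen's inequality to pull the logarithm inside the expectation. Equivalently, and more transparently, I would invoke the standard identity that $\log p(\vr_{i,1:M})$ equals the bound plus $\KL\!\left(q_\phi(\va_i,\vd_{1:M}|\vr_{i,1:M}) \,\|\, p(\va_i,\vd_{1:M}|\vr_{i,1:M})\right)$, and simply drop the nonnegative KL term. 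This immediately gives $\log p(\vr_{i,1:M}) \geq \mathbb{E}_{q_\phi}\!\left[\log \frac{p(\vr_{i,1:M}, \va_i, \vd_{1:M})}{q_\phi(\va_i,\vd_{1:M}|\vr_{i,1:M})}\right]$, and the remaining work is purely to massage the right-hand side into the three named pieces.

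The core of the argument is then algebraic. I would factor the generative joint using the prior independence of ability and items, $p(\vr_{i,1:M}, \va_i, \vd_{1:M}) = p_\theta(\vr_{i,1:M}|\va_i,\vd_{1:M}) \, p(\va_i) \, p(\vd_{1:M})$, as justified by the graphical models in Fig.~\ref{fig:irt_graph}, and factor the variational posterior by the assumed decomposition $q_\phi(\va_i,\vd_{1:M}|\vr_{i,1:M}) = q_\phi(\va_i|\vd_{1:M},\vr_{i,1:M}) \, q_\phi(\vd_{1:M}|\vr_{i,1:M})$. Substituting both and splitting the logarithm of the ratio produces three terms: the expected conditional log-likelihood, which is exactly $\mathcal{L}_{\text{recon}}$; a term $\mathbb{E}_{q_\phi}\!\left[\log \frac{p(\va_i)}{q_\phi(\va_i|\vd_{1:M},\vr_{i,1:M})}\right]$; and a term $\mathbb{E}_{q_\phi}\!\left[\log \frac{p(\vd_{1:M})}{q_\phi(\vd_{1:M}|\vr_{i,1:M})}\right]$.

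The one step that needs care, and the only place the factorization assumption is genuinely used, is reducing the ability term to the stated nested form. Since its integrand depends on $\vd_{1:M}$ only through the conditioning in $q_\phi(\va_i|\vd_{1:M},\vr_{i,1:M})$, I would integrate out $\va_i$ first under $q_\phi(\va_i|\vd_{1:M},\vr_{i,1:M})$, producing the conditional divergence $-D_{\text{ability}} = -\KL\!\left(q_\phi(\va_i|\vd_{1:M},\vr_{i,1:M}) \,\|\, p(\va_i)\right)$, and leave an outer expectation over $q_\phi(\vd_{1:M}|\vr_{i,1:M})$, giving $-\mathbb{E}_{q_\phi(\vd_{1:M}|\vr_{i,1:M})}[D_{\text{ability}}]$. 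The item term collapses directly to $-D_{\text{item}} = -\KL\!\left(q_\phi(\vd_{1:M}|\vr_{i,1:M}) \,\|\, p(\vd_{1:M})\right)$ because its integrand does not involve $\va_i$, so the inner $\va_i$-expectation is trivial. Collecting the three pieces reproduces the VIBO objective and establishes $\log p(\vr_{i,1:M}) \geq \textup{VIBO}$. I expect the main obstacle to be purely bookkeeping rather than conceptual: keeping the conditioning structure straight so that the ability contribution emerges as an expectation over $q_\phi(\vd_{1:M}|\vr_{i,1:M})$ of a conditional KL divergence, rather than as a single joint KL, and stating explicitly the prior independence $p(\va_i,\vd_{1:M}) = p(\va_i)p(\vd_{1:M})$ that lets the two divergence terms separate cleanly.
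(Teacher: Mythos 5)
Your proposal is correct and takes essentially the same route as the paper's proof: apply Jensen's inequality to the log marginal with the factorized variational posterior (equivalently, drop the nonnegative KL to the true posterior), factor the generative joint as $p_\theta(\vr_{i,1:M}|\va_i,\vd_{1:M})\,p(\va_i)\,p(\vd_{1:M})$, split the log ratio into three terms, and identify the ability term as an expectation over $q_\phi(\vd_{1:M}|\vr_{i,1:M})$ of a conditional KL and the item term as a single KL. Your bookkeeping is in fact more careful than the paper's: you correctly record that the divergence terms enter the bound with \emph{negative} signs, i.e.\ $\log p(\vr_{i,1:M}) \geq \mathcal{L}_{\text{recon}} - \mathbb{E}_{q_\phi(\vd_{1:M}|\vr_{i,1:M})}[D_{\text{ability}}] - D_{\text{item}}$, whereas the paper's statement and proof carry a sign typo, writing the KL terms with $+$ signs.
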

\begin{proof}
Expand marginal and apply Jensen's inequality:
\begin{align*}
    \log p_\theta(\vr_{i,1:M}) &\geq \mathbb{E}_{q_\phi(\va_i, \vd_{1:M}|\vr_{i,1:M})}\left[ \log \frac{p_\theta(\vr_{i,1:M},\va_i, \vd_{1:M})}{q_\phi(\va_i, \vd_{1:M}|\vr_{i,1:M})} \right] \\
    &= \mathbb{E}_{q_\phi(\va_i, \vd_{1:M}|\vr_{i,1:M})}\left[ \log p_\theta(\vr_{i,1:M},\va_i, \vd_{1:M}) \right] \\
    & \quad + \mathbb{E}_{q_\phi(\va_i, \vd_{1:M}|\vr_{i,1:M})}\left[ \log \frac{p(\va_i)}{q_\phi(\va_i|\vd_{1:M},\vr_{i,1:M})} \right] \\
    & \quad + \mathbb{E}_{q_\phi(\va_i, \vd_{1:M}|\vr_{i,1:M})}\left[ \log \frac{p(\vd_{1:M})}{q_\phi(\vd_{1:M}|\vr_{i,1:M})} \right] \\
    &= \mathcal{L}_{\text{recon}} + \mathcal{L}_{\text{A}} + \mathcal{L}_{\text{B}}
\end{align*}
Rearranging the latter two terms, we find that:
\begin{align*}
\mathcal{L}_{\text{A}} &= \mathbb{E}_{q_\phi(\vd_{1:M}|\vr_{i,1:M})} \left[ \KL(q_\phi(\va_i|\vd_{1:M},\vr_{i,1:M}) || p(\va_i)) \right] \\
\mathcal{L}_{\text{B}} &= \mathbb{E}_{q_\phi(\vd_{1:M}|\vr_{i,1:M})}\left[ \log \frac{p(\vd_{1:M})}{q_\phi(\vd_{1:M}|\vr_{i,1:M})} \right] \\
&= \KL(q_\phi(\vd_{1:M}|\vr_{i,1:M}) || p(\vd_{1:M}))
\end{align*}
Since $\textup{VIBO} = \mathcal{L}_{\text{recon}} + \mathcal{L}_{\text{A}} + \mathcal{L}_{\text{B}}$ and KL divergences are non-negative, we have shown that VIBO is a lower bound on $\log p_\theta(\vr_{i,1:M})$.
\end{proof}

Thm.~\ref{thm:virtu} leaves several choices up to us, and we opt for the simplest ones.
For instance, the prior distributions are chosen to be independent standard Normal distributions: $p(\va_i) = \prod_{k=1}^K p(a_{i,k})$ and $p(\vd_{1:M}) = \prod_{j=1}^M p(\vd_{j})$ where $p(a_{i,k})$ and $p(\vd_j)$ are $\mathcal{N}(0,1)$.
Further, we found it sufficient to assume $q_\phi(\vd_{1:M}|\vr_{i,1:M}) = q_\phi(\vd_{1:M})= \prod_{j=1}^M q_\phi(\vd_{j})$ although nothing prevents the general case.
Initially, we assume the generative model, $p_\theta(\vr_{i,1:M}|\va_i,\vd_{1:M})$, to be an IRT model (thus $\theta$ is empty); later we explore generalizations.

The posterior $q_\phi(\va_i|\vd_{1:M},\vr_{i,1:M})$ needs to be robust to missing data as often not every person answers every question.
To achieve this, we explore the following family:
\begin{equation}
    q_\phi(\va_i|\vd_{1:M},\vr_{i,1:M}) = \prod_{j=1}^M q_\phi(\va_i|\vd_j,\vr_{i,j})
\end{equation}
If we assume each component $q_\phi(\va_i|\vd_j,\vr_{i,j})$ is Gaussian, then $q_\phi(\va_i|\vd_{1:M},\vr_{i,1:M})$ is Gaussian as well, being a Product-Of-Experts \cite{hinton1999products,wu2018multimodal}.
If item $j$ is missing, we replace its term in the product with the prior, $p(\va_i)$ representing no added information.
We found this design to outperform averaging over non-missing entries: $\frac{1}{M}\sum_{j=1}^M q_\phi(\va_i|\vd_j,\vr_{i,j})$.



As VIBO is a close cousin of the ELBO, we can estimate its gradients with respect to $\theta$ and $\phi$ similarly:
\begin{align*}
    \nabla_\theta \textup{VIBO} &= \nabla_\theta \mathcal{L}_{\text{recon}} \\
    &= \mathbb{E}_{q_\phi(\va_i, \vd_{1:M}|\vr_{i,1:M})}\left[ \nabla_\theta \log p_\theta(\vr_{i,1:M}|\va_i, \vd_{1:M}) \right] \\
    \nabla_\phi \textup{VIBO} &= \nabla_\phi \mathbb{E}_{q_\phi(\vd_{1:M}|\vr_{i,1:M})}[D_{\text{ability}}] + \nabla_\phi D_{\text{item}} \\
    &= \nabla_\phi \mathbb{E}_{q_\phi(\va_i,\vd_{1:M}|\vr_{i,1:M})}\left[\frac{p(\va_i)p(\vd_{1:M})}{q_\phi(\va_i,\vd_{1:M}|\vr_{i,1:M})}\right]
\end{align*}
As in Eq.~\ref{eq:grad:elbo}, we may wish to move the gradient inside the KL divergences by reparameterization to reduce variance.
To allow easy reparameterization, we define all variational distributions $q_\phi(\cdot|\cdot)$ as Normal distributions with diagonal covariance.
In practice, we find that estimating $\nabla_\theta \text{VIBO}$ and $\nabla_\phi \text{VIBO}$ with a single sample is sufficient.
With this setup, VIBO can be optimized using stochastic gradient descent to learn an amortized inference model that maximizes the marginal probability of observed data.
We summarize the required computation to calculate VIBO in Alg.~\ref{alg:vibo}.

\begin{algorithm}[h!]
\SetAlgoLined
    Assume we are given observed responses for person $i$, $\vr_{i1:M}$\;
    Compute $\mu_{\vd}, \sigma^2_{\vd} = q_\phi(\vd_{1:M})$\;
    Sample $\vd_{1:M} \sim \mathcal{N}(\mu_{\vd}, \sigma^2_{\vd})$\;
    Compute $\mu_{\va}, \sigma^2_{\va} = q_\phi(\va_i|\vd_{1:M},\vr_{i,1:M})$\;
    Sample $\va_i \sim \mathcal{N}(\mu_{\va}, \sigma^2_{\va})$\;
    Compute $\mathcal{L}_{\text{recon}} = \log p_\theta(\vr_{i,1:M}|\va_i,\vd_{1:M})$\;
    Compute $D_{\text{ability}} = \KL(\mathcal{N}(\mu_{\va}, \sigma^2_{\va})||\mathcal{N}(0,1))$\;
    Compute $D_{\text{item}} = \KL(\mathcal{N}(\mu_{\vd}, \sigma^2_{\vd})||\mathcal{N}(0,1))$\;
    Compute $\text{VIBO} = \mathcal{L}_{\text{recon}} + D_{\text{ability}} + D_{\text{item}}$
    \caption{VIBO Forward Pass}
    \label{alg:vibo}
\end{algorithm}
A public implementation of VIBO will be available in PyTorch and Pyro \cite{bingham2019pyro} along with a Python package for more practical uses modeled after the R package, MIRT \cite{chalmers2012mirt}.

\section{Datasets}
We explore one synthetic dataset, to build intuition and confirm parameter recovery, and five large scale applications of IRT to real world data, summarized in Table~\ref{table:datasets}.

\begin{table}[h!]
\caption{Dataset Statistics}
\label{table:datasets}
\begin{center}
\begin{small}
\begin{sc}
\begin{tabular}{lccc}
\toprule
& \# Persons & \# Items & Missing Data? \\
\midrule
CritLangAcq & 669498 & 95 & N \\
WordBank & 5520 & 797 & N \\
DuoLingo & 2587 & 2125 & Y \\
Gradescope & 1254 & 98 & Y \\
PISA & 519334 & 183 & Y \\
\bottomrule
\end{tabular}
\end{sc}
\end{small}
\end{center}
\vskip -0.1in
\end{table}

\textbf{Synthetic IRT}$\quad$
To sanity check that VIBO performs as well as other inference techniques, we synthetically generate a dataset of responses using a 2PL IRT model:
sample $\va_i \sim p(\va_i)$, $\vd_j \sim p(\vd_j)$.
Given ability and item characteristics, IRT-2PL determines a Bernoulli distribution over responses to item $j$ by person $i$. We sample once from this Bernoulli distribution to ``generate" an observation.
In this setting, we know the ground truth ability and item characteristics.
We vary $N$ and $M$ to explore parameter recovery.

\textbf{Second Language Acquisition}$\quad$
This dataset contains native and non-native English speakers answering questions to a grammar quiz\footnote{The quiz can be found at \url{www.gameswithwords.org}. The data is publically available at \url{osf.io/pyb8s}.}, which upon completion would return a prediction of the user's native language.
Using social media, over half a million users of varying ages and demographics completed the quiz.
Quiz questions often contain both visual and linguistic components.
For instance, a quiz question could ask the user to ``choose the image where the dog is chased by the cat" and present two images of animals where only one of image agrees with the caption.
Every response is thus binary, marked as correct or incorrect.
In total, there are 669,498 people with 95 items and no missing data.
The creators of this dataset use it to study the presence or absence of a ``critical period" for second language acquisition \cite{hartshorne2018critical}.
We will refer to this dataset as \textsc{CritLangAcq}.

\textbf{WordBank: Vocabulary Development}$\quad$
The MacArthur-Bates Communicative Development Inventories (CDIs) are a widely used metric for early language acquisition in children, testing concepts in vocabulary comprehension, production, gestures, and grammar.
The WordBank \cite{frank2017wordbank} database archives many independently collected CDI datasets across languages and research laboratories\footnote{\url{github.com/langcog/wordbankr}}.
The database consists of a matrix of people against vocabulary words where the $(i,j)$ entry is 1 if a parent reports that child $i$ has knowledge of word $j$ and 0 otherwise.
Some entries are missing due to slight variations in surveys and incomplete responses.
In total, there are 5,520 children responding to 797 items.

\textbf{DuoLingo: App-Based Language Learning}$\quad$
We examine the 2018 DuoLingo Shared Task on Second Language Acquisition Modeling (SLAM)\footnote{\url{sharedtask.duolingo.com/2018.html}} \cite{settles2018second}.
This dataset contains anonymized user data from the popular education application, DuoLingo.
In the application, users must choose the correct vocabulary word among a list of distractor words.
We focus on the subset of native English speakers learning Spanish and only consider lesson sessions.
Each user has a timeseries of responses to a list of vocabulary words, each of which is shown several times.
We repurpose this dataset for IRT: the goal being to infer the user's language proficiency from his or her errors.
As such, we average over all times a user has seen each vocabulary item.
For example, if the user was presented ``habla" 10 times and correctly identified the word 5 times, he or she would be given a response score of 0.5.
We then round
it to be 0 or 1.
We will revisit a continuous version in the discussion.
After processing, we are left with 2587 users and 2125 vocabulary words with missing data as users frequently drop out.
We ignore user and syntax features.

\textbf{Gradescope: Course Exam Data}$\quad$
Gradescope \cite{singh2017gradescope} is a popular course application that assists teachers in grading student assignments.
This dataset contains in total 105,218 reponses from 6,607 assignments in 2,748 courses and 139 schools.
All assignments are instructor-uploaded, fixed-template assignments, with at least 3 questions, with the majority being examinations.
We focus on course 102576, randomly chosen from the courses with the most students.
We remove students who did not respond to any questions.
We binarize the response scores by rounding up partial credit.
In total, there are 1254 students with 98 items, with missing entries, for course 102576.

\textbf{PISA 2015: International Assessment}$\quad$
The Programme for International Student Assessment (PISA) is an international exam that measures 15-year-old students' reading, mathematics, and science literacy every three years.
It is run by the Organization for Economic Cooperation and Development (OECD).
The OECD released anonymized data from PISA '15 for students from 80 countries and education systems\footnote{\url{oecd.org/pisa/data/2015database}}.
We focus on the science component.
Using IRT to access student performance is part of the pipeline the OECD uses to compute holistic literacy scores for different countries.
As part of our processing, we binarize responses, rounding any partial credit to 1.
In total, there are 519,334 students and 183 questions.
Not every student answers every question as many versions of the computer exam exist.

\section{Fast and Accurate Inference}
\label{inferenceexpts}
We will show that VIBO is as accurate as HMC and nearly as fast as MLE/EM, making Bayesian IRT a realistic, even preferred, option for modern applications.

\subsection{Evaluation}
We compare compute cost of VIBO to HMC, EM\footnote{We use the popular MIRT package in R for EM with 61 points for numerical integration.}, and MLE using  IRT-2PL by measuring wall-clock run time.
For HMC, we limit to drawing 200 samples with 100 warmup steps with no parallelization.
For VIBO and MLE, we use the Adam optimizer with a learning rate of 5e-3. We choose to conservatively optimize for 10k iterations to estimate cost.

However, speed only matters assuming good performance.
We use three metrics of accuracy:
(1) For the synthetic dataset, because we know the true ability, we can measure the expected correlation between it and the inferred ability under each algorithm (with the exception of EM as ability is not inferred).
A correlation of 1.0 would indicate perfect inference.
(2) The most general metric is the accuracy of imputed missing data.
We hold out 10\% of the responses, use the inferred ability and item characteristics to generate responses thereby populating missing entries, and compute prediction accuracy for held-out responses.
This metric is a good test of ``overfitting" to observed responses.
(3) In the case of fully Bayesian methods (HMC and VIBO) we can compare posterior predictive statistics \cite{sinharay2006posterior} to further test uncertainty calibration (which accuracy alone does not capture).
Recall that the posterior predictive is defined as:
\begin{equation*}
    p(\tilde{\vr}_{i, 1:M}|\vr_{i,1:M}) = \mathbb{E}_{p(\va_i,\vd_{1:M}|\vr_{i,1:M})}[p(\tilde{\vr}_{i,1:M}|\va_i,\vd_{1:M})]
\end{equation*}
For HMC, we are given samples from the posterior; for VIBO, we draw samples from the approximation $q_\phi(\va_i,\vd_{1:M}|\vr_{i,1:M})$.
Given such parameter samples We can then sample responses; 
we compare summary statistics of these response samples: the average number of items answered correctly per person and the average number of people who answered each item correctly.

\subsection{Synthetic Data Results}


\begin{figure}[tbh]
    \centering

      \includegraphics[width=\linewidth]{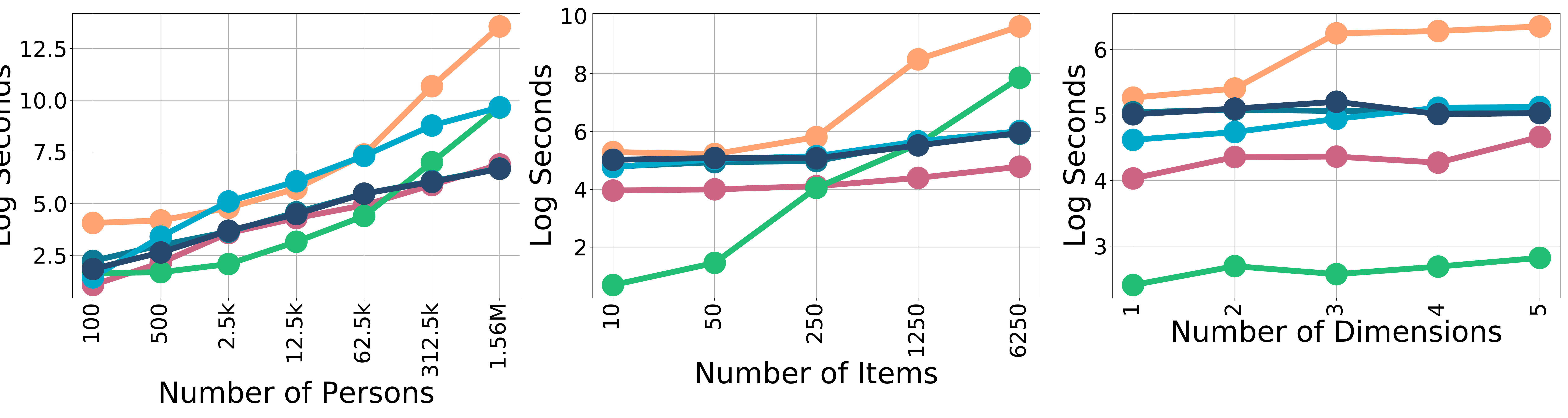}
        \includegraphics[width=\linewidth]{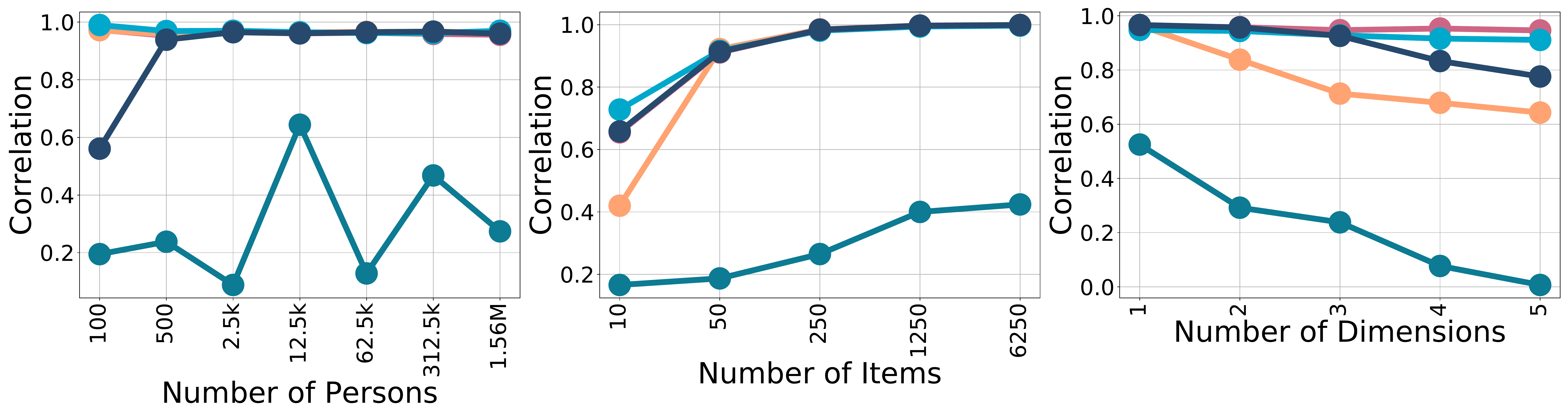}
        \includegraphics[width=\linewidth]{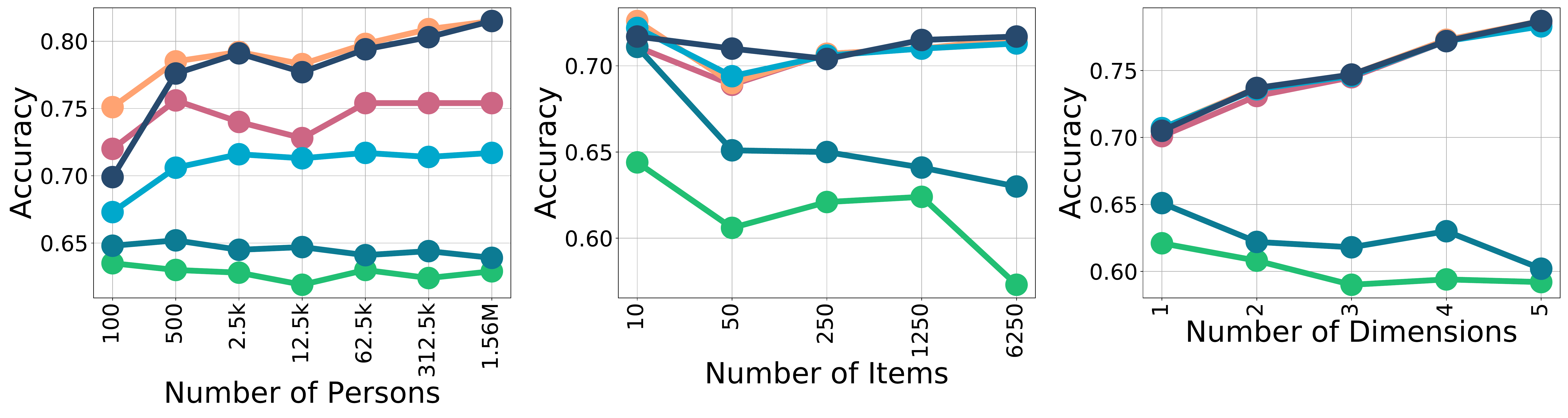}
            \includegraphics[width=0.9\linewidth]{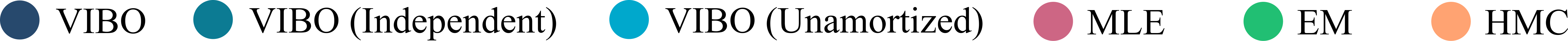}

%
%
%
    \caption{
     Performance of inference algorithms for IRT for synthetic data, as we vary the number of people, items, and latent ability dimensions.
     (Top) Computational cost in log-seconds (e.g.~1 log second is about 3 seconds whereas 10 log seconds is 6.1 hours).
    (Middle) Correlation of inferred ability with true ability (used to generate the data).
    (Bottom) Accuracy of held-out data imputation.
    }
    \label{fig:synth_results}
\end{figure}

With synthetic experiments we are free to vary $N$ and $M$ to extremes to stress test the inference algorithms: first, we range from 100 to 1.5 million people, fixing the number of items to 100 with dimensionality 1; second, we range from 10 to 6k items, fixing 10k people with dimesionality 1; third,
we vary the number of latent ability dimensions from 1 to 5, keeping a constant 10k people and 100 items.

\begin{figure*}[tbh]
    \begin{subfigure}[b]{\linewidth}
        \includegraphics[width=\linewidth]{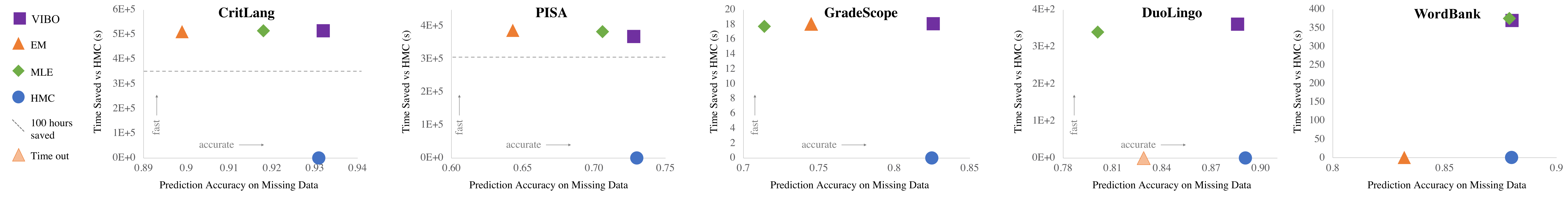}
        \caption{Real World: Accuracy of Imputing Missing Data vs Time Cost}
    \end{subfigure}
    \begin{subfigure}[b]{\linewidth}
        \includegraphics[width=\linewidth]{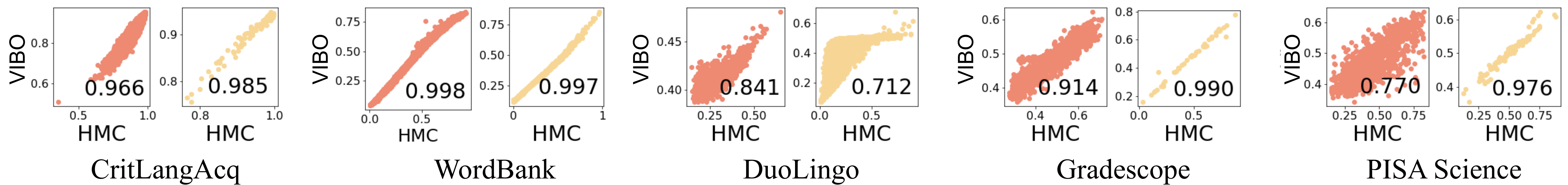}
        \caption{Real World: Posterior Predictive Checks}
    \end{subfigure}
    \caption{
    (a) Accuracy of missing data imputation for real world datasets plotted against time saved in seconds compared to using HMC.
    (b) Samples statistics from the predictive posterior defined using HMC and VIBO.
    A correlation of 1.0 would be perfect alignment between two inference techniques.
    Subfigures in red show the average number of items answered correctly for each person.
    Subfigures in yellow show the average number of people who answered each item correctly.
    }
    \label{fig:realworld}
\end{figure*}

Fig.~\ref{fig:synth_results} shows run-time and performance results of VIBO, MLE, HMC, EM, and two ablations of VIBO (discussed in Sec.~\ref{ablation}).
First, comparing parameter recovery performance (Fig.~\ref{fig:synth_results} middle) for VIBO and previous algorithms, we see that HMC, MLE and VIBO all recover parameters well.
The only notable differences are: VIBO with very few people, and HMC and (to a lesser extent) VIBO in high dimensions.
The former is because the amortized posterior approximation requires a sufficiently large dataset (around 500 people) to constrain its parameters.
The latter is a simple effect of the scaling of variance for sample-based estimates as dimensionality increases (we fixed the number of samples used, to ease speed comparisons).

Turning to the ability to predict missing data (Fig.~\ref{fig:synth_results} bottom) we see that VIBO performs equally well to HMC, except in the case of very few people (again, discussed below).
(Note that the latent dimensionality does not adversely affect VIBO or HMC for missing data prediction, because the variance is marginalized away.)
MLE also performs well as we scale number of items and latent ability dimensions, but is less able to benefit from more people.
EM on the other hand provides much worse missing data prediction in all cases.

Finally if we examine the speed of inference (Fig.~\ref{fig:synth_results} top), VIBO is only slightly slower than MLE, both of which are orders of magnitude faster than HMC.
For instance, with 1.56 million people, HMC takes 217 hours whereas VIBO takes 800 seconds.
Similarly with 6250 items, HMC takes 4.3 hours whereas VIBO takes 385 seconds.
EM is the fastest for low to medium sized datasets, though its lower accuracy makes this a dubious victory.
Furthermore, EM does not scale as well as VIBO to large datasets.


\subsection{Real World Data Results}
We next apply VIBO to real world datasets in cognitive science and education.
Fig.~\ref{fig:realworld}(a) plots the  accuracy of imputing missing data against the time saved vs HMC (the most expensive inference algorithm) for five  large-scale datasets.
Points in the upper right corner are more desirable as they are more accurate and faster.
The dotted line represents 100 hours saved compared to HMC.

From Fig.~\ref{fig:realworld}(a), we find many of the same patterns as we observed in the synthetic experiments.
Running HMC on CritLangAcq or PISA takes roughly 120 hours whereas VIBO takes 50 minutes for CritLangAcq and 5 hours for PISA, the latter being more expensive because of computation required for missing data.
In comparison, EM is at times faster than VIBO (e.g. Gradescope, PISA) and at times slower.
With respect to accuracy, VIBO and HMC are again identical, outperforming EM by up to 8\% in missing data imputation.
Interestingly, we find the ``overfitting" of MLE to be more pronounced here.
If we focus on DuoLingo and Gradescope, the two datasets with pre-existing large portions of missing values, MLE is surpassed by EM, with VIBO achieving accuracies 10\% higher.

Another way of exploring a model's ability to explain data, for fully Bayesian models, is posterior predictive checks.
Fig.~\ref{fig:realworld}(b) shows posterior predictive checks comparing VIBO and HMC.
We find that the two algorithms strongly agree about the average number of correct people and items in all datasets.
The only systematic deviations occur with DuoLingo: it is possible that this is a case where a more expressive posterior approximation would be useful in VIBO, since the number of items is greater than the number of people.

\subsection{Ablation Studies}
\label{ablation}

We compared VIBO to simpler variants that either do not amortize the posterior or do so with independent distributions of ability and item parameters. These correspond to different variational families, $\mathcal{Q}$ to choose $q$ from:
\begin{itemize}
\item VIBO (Independent): We consider the decomposition $q(\va_i, \vd_{1:M}|\vr_{i,1:M}) = q(\va_i|\vr_{i,1:M})q(\vd_{1:M})$ which treats ability and item characteristics as independent.
\item VIBO (Unamortized): We consider $q(\va_i, \vd_{1:M}|\vr_{i,1:M}) = q_{\psi(\vr_{i,1:M})}(\va_i)q(\vd_{1:M})$, which learns separate posteriors for each $\va_i$, without parameter sharing.
Recall the subscripts $\psi(\vr_{i,1:M})$ indicate a separate variational posterior for each unique set of responses.
\end{itemize}

If we compare unamortized to amortized VIBO in Fig.~\ref{fig:synth_results} (top), we see an important efficiency difference.
The number of parameters for the unamortized version scales with the number of people; the speed shows a corresponding impact, with the amortized version becoming an order of magnitude faster than the unamortized one.
In general, amortized inference is much cheaper, especially in circumstances in which the number of possible response vectors $\vr_{1:M}$ is very large (e.g. $2^{95}$ for CritLangAcq).
Comparing amortized VIBO to the un-amortized equivalent, Table~\ref{table:amortization:timing} compares the wall clock time (sec.) for the 5 real world datasets.
While VIBO is comparable to MLE and EM (Fig.~\ref{fig:realworld}a), unamortized VIBO is 2 to 15 times more expensive.

Exploring accuracy in Fig.~\ref{fig:synth_results} (bottom), we see that the unamortized variant is significantly less accurate at predicting missing data. This can be attributed to overfitting to observed responses.
With 100 items, there are $2^{100}$ possible responses from every person, meaning that even large datasets only cover a small portion of the full set.
With amortization, overfitting is more difficult as the deterministic mapping $f_\phi$ is not hardcoded to a single response vector.
Without amortization, since we learn a variational posterior for every observed response vector, we may not generalize to new response vectors.
Unamortized VIBO is thus much more sensitive to missing data as it does not get to observed the entire response.
We can see evidence of this as unamortized VIBO is superior to amortized VIBO at parameter recovery, Fig.~\ref{fig:synth_results} (middle), where no data is hidden from the model; compare this to missing data imputation, where unamortized VIBO appears inferior: because ability estimates do not share parameters, those with missing data are less constrained yielding poorer predictive performance.

Finally, when there are very few people (100) unamortized VIBO and HMC are better at recovering parameters (Fig.~\ref{fig:synth_results} middle) than amortized VIBO.
This can be explained by amortization: to train an effective regressor $f_\phi$ requires a minimum amount of data.
With too few responses, the amortization gap will be very large, leading to poor inference.
Under scarce data we would thus recommend using HMC, which is fast enough and most accurate.

\begin{table}[h!]
\caption{Time Costs with and without Amortization}
\label{table:amortization:timing}
\begin{center}
\begin{small}
\begin{sc}
\begin{tabular}{lcc}
\toprule
Dataset & Amortized (Sec.) & Un-Amortized (Sec.) \\
\midrule
CritLangAcq & 2.8k & 43.2k \\
WordBank & 176.4 & 657.1 \\
DuoLingo & 429.9 & 717.9 \\
Gradescope & 114.5 & 511.1 \\
PISA & 25.2k & 125.8k \\
\bottomrule
\end{tabular}
\end{sc}
\end{small}
\end{center}
\vskip -0.1in
\end{table}

The above suggests that amortization is important when dealing with moderate to large datasets. Turning to the structure of the amortized posteriors, we note that the factorization we chose in Thm.~\ref{thm:virtu} is only one of many.
Specifically, we could make the simpler assumption of independence between ability and item characteristics given responses in our variational posteriors: VIBO (Independent).
Such a factorization would be simpler and faster due to less gradient computation.
However, in our synthetic experiments (in which we know the true ability and item features), we found the independence assumption to produce very poor results: recovered ability and item characteristics had less than $0.1$ correlation with the true parameters.
Meanwhile the factorization we posed in Thm.~\ref{thm:virtu} consistently produced above 0.9 correlation.
Thus, the insight to decompose $q(\va_i, \vd_{1:M}|\vr_{i,1:M}) = q(\va_i|\vd_{1:M},\vr_{i,1:M})q(\vd_{1:M}|\vr_{i,1:M})$ instead of assuming independence is a critical one. (This point is also supported theoretically by research on faithful inversions of graphical models \cite{webb2018faithful}.)

\section{Deep Item Response Theory}
We have found VIBO to be fast and accurate for inference in 2PL IRT, matching HMC in accuracy and EM in speed.
This classic IRT model is a surprisingly good model for item responses despite its simplicity.
Yet it makes strong assumptions about the functional form of responses and the interaction of factors, which may not capture the nuances of human cognition.
With the advent of much larger data sets we have the opportunity to explore corrections to classic IRT models, by introducing more flexible non-linearities.
As described above, a virtue of VI is the possibility of learning aspects of the generative model by optimizing the inference objective.
We next explore several ways to incorporate learnable non-linearities in IRT, using the modern machinery of deep learning.

\subsection{Nonlinear Generalizations of IRT}
We have assumed thus far that $p(\vr_{i,1:M}|\va_i,\vd_{1:M})$ is a fixed IRT model defining the probability of correct response to each item.
We now consider three different alternatives with varying levels of expressivity that help define a class of more powerful nonlinear IRT.

\textbf{Learning a Linking Function}
We replace the logistic function in standard IRT with a nonlinear linking function.
As such, it preserves the linear relationships between items and people.
We call this VIBO (Link).
For person $i$ and item $j$, the 2PL-Link generative model is:
\begin{equation}
    p(r_{ij}|\va_i,\vd_j) =  f_\theta(-\va_i^T \vk_j - d_j)
\end{equation}
where $f_\theta$ is a one-dimensional nonlinear function followed by a sigmoid to constrain the output to be within $[0,1]$.
In practice, we parameterize $f_\theta$ as a multilayer perceptron (MLP) with three layers of 64 hidden nodes with ELU nonlinearities.

\textbf{Learning a Neural Network}$\quad$
Here, we no longer preserve the linear relationships between items and people and instead feed the ability and item characteristics directly into a neural network, which will combine the inputs nonlinearly.
We call this version VIBO (Deep).
For person $i$ and item $j$, the Deep generative model is:
\begin{equation}
    p(r_{ij}|\va_i, \vd_j) = f_\theta(\va_i,\vd_j)
\end{equation}
where again $f_\theta$ includes a Sigmoid function at the very end to preserve the correct output signatures.
This is an even more expressive model than VIBO (Link).
In practice, we parameterize $f_\theta$ as three MLPs, each with 3 layers of 64 nodes and ELU nonlinearities.
The first MLP maps ability to a hidden vector of size 64; the second maps item characteristics to a hidden vector of 64.
These two hidden vectors are concatenated and given to the final MLP, which outputs a prediction for response.

\textbf{Learning a Residual Correction}$\quad$
Although clearly a powerful model, we might fear that VIBO (Deep) becomes too uninterpretable.
So, for the third and final nonlinear model, we use the standard IRT but add a nonlinear residual component that can correct for any inaccuracies.
We call this version VIBO (Residual).
For person $i$ and item $j$, the 2PL-Residual generative model is:
\begin{equation}
    p(r_{ij}|\va_i, \vk_j, d_j) = \frac{1}{1 + e^{-\va^T_i\vk_j - d_j + f_\theta(\va_i,\vk_j,d_j)}}
\end{equation}
During optimization, we initialize the weights of the residual network to 0, thus ensuring its initial output is 0.
This encourages the model to stay close to IRT, using the residual only when necessary.
We use the same architectures for the residual component as in VIBO (Deep).

\begin{table*}[t!]
\caption{Log Likelihoods and Missing Data Imputation for Deep Generative IRT Models}
\begin{center}
\begin{small}
\begin{sc}
\begin{tabular}{lcccccc}
\toprule
Dataset & Deep IRT & VIBO (IRT-1PL) & VIBO (IRT-2PL) & VIBO (Link-2PL) & VIBO (Deep-2PL) & VIBO (Res.-2PL) \\
\midrule
CritLangAcq & - & $-11249.8 \pm 7.6$ & $-10224.0 \pm 7.1$ & $-9590.3 \pm 2.1$ & $-9311.2 \pm 5.1$ & $\mathbf{-9254.1} \pm 4.8$ \\
WordBank & - & $-17047.2 \pm 4.3$ & $-5882.5 \pm 0.8$ & $-5268.0 \pm 7.0$ & $\mathbf{-4658.4} \pm 3.9$ & $-4681.4 \pm 2.2$ \\
DuoLingo & - & $-2833.3 \pm 0.7$ & $-2488.3 \pm 1.4$ & $-1833.9 \pm 0.3$ & $-1834.2 \pm 1.3$ & $\mathbf{-1745.4} \pm 4.7$ \\
Gradescope & - & $-1090.7 \pm 2.9$ & $-876.7 \pm 3.5$ & $-750.8 \pm 0.1$ & $\mathbf{-705.1} \pm 0.5$ & $-715.3 \pm 2.7$ \\
PISA & - & $-13104.2 \pm 5.1$ & $-6169.5 \pm 4.8$ & $-6120.1 \pm 1.3$ & $-6030.2 \pm 3.3$ & $\mathbf{-5807.3} \pm 4.2$ \\
\midrule
CritLangAcq & $0.934$ & $0.927$ & $0.932$ & $0.945$ & $\mathbf{0.948}$ & $0.947$ \\
WordBank & $0.681$ & $0.876$ & $0.880$ & $0.888$ & $0.889$ & $\mathbf{0.889}$ \\
DuoLingo & $0.884$ & $0.880$ & $0.886$ & $0.891$ & $\mathbf{0.897}$ & $0.894$ \\
Gradescope & $0.813$ & $0.820$ & $0.826$ & $0.840$ & $0.847$ & $\mathbf{0.848}$ \\
PISA & $0.524$ & $0.723$ & $0.728$ & $0.718$ & $\mathbf{0.744}$ &  $0.739$ \\
\bottomrule
\end{tabular}
\end{sc}
\end{small}
\end{center}
\vskip -0.1in
\label{table:real:nonlinear}
\end{table*}


\subsection{Evaluation}
A generative model explains the data better when it assigns observations higher probability.
We thus evaluate generative models by estimating the log marginal likelihood $\log p(\vr_{1:N,1:M})$ of the training dataset.
A higher number (closer to 0) is better.
For a single person, the log marginal likelihood of his or her $M$ responses can be computed as:
\begin{equation}
    \log p(\vr_{i,1:M}) \approx \log \mathbb{E}_{q_\phi(\va_i, \vd_{1:M}|\vr_{i,1:M})}\left[ \frac{p_\theta(\vr_{i,1:M}, \va_i, \vd_{1:M})}{q_\phi(\va_i, \vd_{1:M}|\vr_{i,1:M})} \right]
    \label{eq:marg:evaluation}
\end{equation}
We use 1000 samples to estimate Eq.~\ref{eq:marg:evaluation}.

Additionally, we measure accuracy on missing data imputation as we did in section \ref{inferenceexpts}.
A more powerful generative model, that is more descriptive of the data, should also be better at filling in missing values.

\subsection{Results}
The top half of Table~\ref{table:real:nonlinear} compares the log likelihoods of observed data whereas the bottom half of Table~\ref{table:real:nonlinear} compares the accuracy of imputing missing data.
We include VIBO inference with classical IRT-1PL and IRT-2PL generative models as baselines.
We find a consistent trend: the more powerful generative models achieve a higher log likelihood (closer to 0) and a higher accuracy.
In particular, we find very large increases in log likelihood moving from IRT to Link, spanning 100 to 500 log points depending on the dataset.
Further, from Link to Deep and Residual, we find another increase of 100 to 200 log points.
In some cases, we find Residual to outperform Deep, though the two are equally parameterized, suggesting that initialization with IRT can find better local optima.
These gains in log likelihood translate to a consistent 1 to 2\% increase in held-out accuracy for Link/Deep/Residual over IRT.
This suggests that the datasets are large enough to use the added model flexibility appropriately, rather than overfitting to the data.

We also compare our deep generative IRT models with the purely deep learning approach called Deep-IRT \cite{zhang2017dynamic} (see Sec.~\ref{sec:related}), that does not model posterior uncertainty.
Unlike traditional IRT models, Deep-IRT was built for knowledge tracing and assumed sequential responses.
To make our datasets amenable to Deep-IRT, we assume an ordering of responses from $j=1$ to $j=M$.
As shown in Table~\ref{table:real:nonlinear}, our models outperform Deep-IRT in all 5 datasets by as much as 30\% in missing data imputation (e.g.~WordBank).

\subsection{Interpreting the Linking Function}
With nonlinear models, we face an unfortunate tradeoff between interpretability and expressivity.
In domains like education, practitioners  greatly value the interpretability of IRT where predictions can be directly attributed to ability or item features.
With VIBO (Deep), our most expressive model, predictions use a neural network, making it hard to understand the interactions between people and items.
\begin{figure}[h!]
    \centering
    \begin{subfigure}[b]{0.19\linewidth}
        \includegraphics[width=\linewidth]{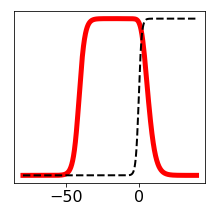}
        \caption{}
    \end{subfigure}
    \begin{subfigure}[b]{0.19\linewidth}
        \includegraphics[width=\linewidth]{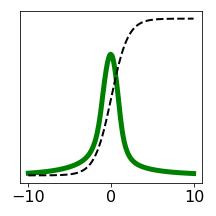}
        \caption{}
    \end{subfigure}
    \begin{subfigure}[b]{0.19\linewidth}
        \includegraphics[width=\linewidth]{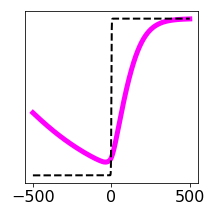}
        \caption{}
    \end{subfigure}
    \begin{subfigure}[b]{0.19\linewidth}
        \includegraphics[width=\linewidth]{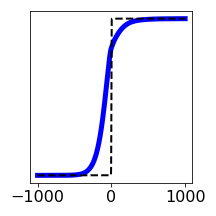}
        \caption{}
    \end{subfigure}
    \begin{subfigure}[b]{0.19\linewidth}
        \includegraphics[width=\linewidth]{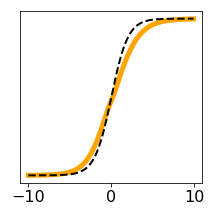}
        \caption{}
    \end{subfigure}
    \caption{Learned link functions for (a) CritLangAcq, (b) WordBank, (c) DuoLingo, (d) Gradescope, and (e) PISA.
    The dotted black line shows the default logistic function.}
    \label{fig:link}
\end{figure}

Fortunately, with VIBO (Link), we can maintain a degree of interpretability along with power.
The ``Link" generative model is identical to IRT, only differing in the linking function (i.e.~item response function).
Each subfigure in Fig.~\ref{fig:link} shows the learned response function for one of the real world datasets; the dotted black line represents the best standard linking function, a sigmoid.
We find three classes of linking functions: (1) for Gradescope and PISA, the learned function stays near a Sigmoid. (2) For WordBank and CritLangAcq, the response function closely resembles an unfolding model \cite{liu2018fitting,andrich1993hyperbolic}, which encodes a more nuanced interaction between ability and item characteristics: higher scores are related to higher ability only if the ability and item characteristics are ``nearby" in latent space.
(3) For DuoLingo, we find a piecewise function that resembles a sigmoid for positive values and a negative linear function for negative values.
In cases (2) and (3) we find much greater differences in log likelihood between VIBO (IRT) and VIBO (Link). See Table~\ref{table:real:nonlinear}.
For DuoLingo, VIBO (Link) matches the log density of more expressive models, suggesting that most of the benefit of nonlinearity is exactly in this unusual linking function.

\section{Polytomous Responses}
Thus far, we have been working only with response data collapsed into binary correct/incorrect responses.
However, many questionnaires and examinations are not binary: responses can be multiple choice (e.g. Likert scale) or even real valued (e.g. 92\% on a course test).
As we have posed IRT as a generative model, we have prescribed a Bernoulli distribution over the $i$-th person's response to the $j$-th item.
Yet nothing prevents us from choosing a different distribution, such as Categorical for multiple choice or Normal for real-values.
The DuoLingo dataset contains partial credit, computed as a fraction of times an individual gets a word correct.
A more granular treatment of these polytomous values should yield a more faithful model that can better capture the differences between people.
We thus modeled the DuoLingo data using for $p(\vr_{i,1:M}|\va_i, \vd_{1:M})$ a (truncated) Normal distribution over responses with fixed variance.
\begin{table}[h!]
\caption{DuoLingo with Polytomous Responses}
\label{table:duolingo:continuous}
\begin{center}
\begin{small}
\begin{sc}
\begin{tabular}{lcc}
\toprule
Inf. Alg. & Train & Test \\
\midrule
VIBO (IRT) &  $-22038.07$ & $-21582.03$ \\
VIBO (Link) & $-17293.35$ & $-16588.06$ \\
VIBO (Deep) & $\mathbf{-15349.84}$ & $\mathbf{-14972.66}$ \\
VIBO (Res.) & $-15350.66$ & $-14996.27$ \\
\bottomrule
\end{tabular}
\end{sc}
\end{small}
\end{center}
\vskip -0.1in
\end{table}
Table~\ref{table:duolingo:continuous} show the log densities: we again observe large improvements from nonlinear models.

Item Response Theory can in this way be extended for students who produce work where their responses go beyond binary correct/incorrect (imagine students writing text, drawing pictures, or even learning to code) encouraging educators to give students more engaging items without having to give up Bayesian response modeling.

\section{Related Work}
\label{sec:related}
We described above a variety of methods for parameter estimation in IRT such as MLE, EM, and MCMC. The benefits and drawbacks of these methods are well-documented \cite{lindenHandbookItemResponse2017}, so we need not discuss them here.
Instead, we focus specifically on methods that utilize deep neural networks or variational inference to estimate IRT parameters.


While variational inference has been suggested as a promising alternative to other inference approaches for IRT \cite{lindenHandbookItemResponse2017}, there has been surprisingly little work in this area.
In an exploration of Bayesian prior choice for IRT estimation, Natesan et al. \cite{natesan2016bayesian} posed a variational approximation to the posterior:
\begin{equation}
    p(\va_i, \vd_j|r_{i,j}) \approx q_\phi(\va_i, \vd_j) = q_\phi(\va_i)q_\phi(\vd_j)
    \label{eq:natesan}
\end{equation}
This is an unamortized and independent posterior family, unlike VIBO.
As we noted above in Sec.~\ref{ablation}, both amortization and dependence of ability on item parameters were crucial for our results.

We are aware of two approaches that incorporate deep neural networks into Item Response Theory: Deep-IRT \cite{yeung2019deep} and DIRT \cite{cheng2019dirt}. Deep-IRT is a modification of the Dynamic Key-Value Memory Network (DKVMN) \cite{zhang2017dynamic} that treats data as longitudinal, processing items one-at-a-time using a recurrent architecture. Deep-IRT produces point estimates of ability and item difficulty at each time step, which are then passed into a 1PL IRT function to produce the probability of answering the item correctly.
The main difference between DIRT and Deep-IRT is the choice of neural network: instead of the DKVMN, DIRT uses an LSTM with attention \cite{vaswani2017attention}.
In our experiments, we compare our approach to Deep-IRT and find that we outperform it by up to 30\% on the accuracy of missing response imputation.
On the other hand, our models do not capture the longitudinal aspect of response data.
Combining the two approaches would be natural.

Lastly, Curi et al. \cite{curi2019interpretable} used a Variational Autoencoder (VAE) to estimate IRT parameters in a 28-question synthetic dataset.
However, this approach only modeled ability as a latent variable, ignoring the full joint posterior incorporating item features.
In this work, our analogue to the VAE builds on the IRT graphical model, incorporating both ability and item characteristics in a principled manner.
This could explain why Curi et.~al.~report the VAE requiring substantially more data to recover the true parameters when compared to MCMC whereas we find comparable data-efficiency between VIBO and MCMC.


\section{Conclusion}
Item Response Theory is a paradigm for reasoning about the scoring of tests, surveys, and similar measurment instruments.
It is popular and plays an important role in education, medicine, and psychology.
Inferring ability and item characteristics poses a technical challenge: balancing efficiency against accuracy.
In this paper we have found that variational inference provides a potential solution, running orders of magnitude faster than MCMC algorithms while matching their state-of-the-art accuracy.
Furthermore, this approach allows us to natural extend IRT with non-linearities modeled via deep neural networks.

Many directions for future work suggest themselves.
First, further gains in speed and accuracy could be found by exploring more or less complex families of posterior approximation.
Second, more work is needed to understand deep generative IRT models and determine the most appropriate tradeoff between expressivity and interpretability.
For instance, we found significant improvements from a learned linking function, yet in some applications monotonicity may be judged important to maintain -- greater ability, for instance, should correspond to greater chance of success.
Finally, VIBO should enable more coherent, fully Bayesian, exploration of very large and important datasets, such as PISA \cite{organisation2016pisa}.

Recent advances within AI combined with new massive datasets have enabled advances in many domains.
We have given an example of this fruitful interaction for understanding humans based on their answers to questions.

%



\bibliographystyle{abbrv}
\bibliography{report}

\begin{thebibliography}{10}

\bibitem{ackerman1994using}
T.~A. Ackerman.
\newblock Using multidimensional item response theory to understand what items
  and tests are measuring.
\newblock {\em Applied Measurement in Education}, 7(4):255--278, 1994.

\bibitem{andrich1993hyperbolic}
D.~Andrich and G.~Luo.
\newblock A hyperbolic cosine latent trait model for unfolding dichotomous
  single-stimulus responses.
\newblock {\em Applied Psychological Measurement}, 17(3):253--276, 1993.

\bibitem{beguin2001mcmc}
A.~A. B{\'e}guin and C.~A. Glas.
\newblock Mcmc estimation and some model-fit analysis of multidimensional irt
  models.
\newblock {\em Psychometrika}, 66(4):541--561, 2001.

\bibitem{bingham2019pyro}
E.~Bingham, J.~P. Chen, M.~Jankowiak, F.~Obermeyer, N.~Pradhan, T.~Karaletsos,
  R.~Singh, P.~Szerlip, P.~Horsfall, and N.~D. Goodman.
\newblock Pyro: Deep universal probabilistic programming.
\newblock {\em The Journal of Machine Learning Research}, 20(1):973--978, 2019.

\bibitem{blei2017variational}
D.~M. Blei, A.~Kucukelbir, and J.~D. McAuliffe.
\newblock Variational inference: A review for statisticians.
\newblock {\em Journal of the American Statistical Association},
  112(518):859--877, 2017.

\bibitem{bock1981marginal}
R.~D. Bock and M.~Aitkin.
\newblock Marginal maximum likelihood estimation of item parameters:
  Application of an em algorithm.
\newblock {\em Psychometrika}, 46(4):443--459, 1981.

\bibitem{bock1988full}
R.~D. Bock, R.~Gibbons, and E.~Muraki.
\newblock Full-information item factor analysis.
\newblock {\em Applied Psychological Measurement}, 12(3):261--280, 1988.

\bibitem{braginsky2015developmental}
M.~Braginsky, D.~Yurovsky, V.~A. Marchman, and M.~C. Frank.
\newblock Developmental changes in the relationship between grammar and the
  lexicon.
\newblock In {\em CogSci}, pages 256--261, 2015.

\bibitem{chalmers2012mirt}
R.~P. Chalmers et~al.
\newblock mirt: A multidimensional item response theory package for the r
  environment.
\newblock {\em Journal of Statistical Software}, 48(6):1--29, 2012.

\bibitem{cheng2019dirt}
S.~Cheng, Q.~Liu, E.~Chen, Z.~Huang, Z.~Huang, Y.~Chen, H.~Ma, and G.~Hu.
\newblock Dirt: Deep learning enhanced item response theory for cognitive
  diagnosis.
\newblock In {\em Proceedings of the 28th ACM International Conference on
  Information and Knowledge Management}, pages 2397--2400, 2019.

\bibitem{curi2019interpretable}
M.~Curi, G.~A. Converse, J.~Hajewski, and S.~Oliveira.
\newblock Interpretable variational autoencoders for cognitive models.
\newblock In {\em 2019 International Joint Conference on Neural Networks
  (IJCNN)}, pages 1--8. IEEE, 2019.

\bibitem{dempster1977maximum}
A.~P. Dempster, N.~M. Laird, and D.~B. Rubin.
\newblock Maximum likelihood from incomplete data via the em algorithm.
\newblock {\em Journal of the Royal Statistical Society: Series B
  (Methodological)}, 39(1):1--22, 1977.

\bibitem{edgeworth1888statistics}
F.~Y. Edgeworth.
\newblock The statistics of examinations.
\newblock {\em Journal of the Royal Statistical Society}, 51(3):599--635, 1888.

\bibitem{embretson2013item}
S.~E. Embretson and S.~P. Reise.
\newblock {\em Item response theory}.
\newblock Psychology Press, 2013.

\bibitem{organisation2016pisa}
O.~for Economic Co-operation and D.~(OECD).
\newblock Pisa 2015 database.
\newblock 2016.

\bibitem{frank2017wordbank}
M.~C. Frank, M.~Braginsky, D.~Yurovsky, and V.~A. Marchman.
\newblock Wordbank: An open repository for developmental vocabulary data.
\newblock {\em Journal of child language}, 44(3):677--694, 2017.

\bibitem{gelfand1990sampling}
A.~E. Gelfand and A.~F. Smith.
\newblock Sampling-based approaches to calculating marginal densities.
\newblock {\em Journal of the American statistical association},
  85(410):398--409, 1990.

\bibitem{gershman2014amortized}
S.~Gershman and N.~Goodman.
\newblock Amortized inference in probabilistic reasoning.
\newblock In {\em Proceedings of the annual meeting of the cognitive science
  society}, volume~36, 2014.

\bibitem{haberman1977maximum}
S.~J. Haberman.
\newblock Maximum likelihood estimates in exponential response models.
\newblock {\em The Annals of Statistics}, pages 815--841, 1977.

\bibitem{harlen2001assessment}
W.~Harlen.
\newblock The assessment of scientific literacy in the oecd/pisa project.
\newblock 2001.

\bibitem{hartshorne2018critical}
J.~K. Hartshorne, J.~B. Tenenbaum, and S.~Pinker.
\newblock A critical period for second language acquisition: Evidence from 2/3
  million english speakers.
\newblock {\em Cognition}, 177:263--277, 2018.

\bibitem{harwell1988item}
M.~R. Harwell, F.~B. Baker, and M.~Zwarts.
\newblock Item parameter estimation via marginal maximum likelihood and an em
  algorithm: A didactic.
\newblock {\em Journal of Educational Statistics}, 13(3):243--271, 1988.

\bibitem{hastings1970monte}
W.~K. Hastings.
\newblock Monte carlo sampling methods using markov chains and their
  applications.
\newblock 1970.

\bibitem{hinton1999products}
G.~E. Hinton.
\newblock Products of experts.
\newblock 1999.

\bibitem{hoffman2014no}
M.~D. Hoffman and A.~Gelman.
\newblock The no-u-turn sampler: adaptively setting path lengths in hamiltonian
  monte carlo.
\newblock {\em Journal of Machine Learning Research}, 15(1):1593--1623, 2014.

\bibitem{jordan1999introduction}
M.~I. Jordan, Z.~Ghahramani, T.~S. Jaakkola, and L.~K. Saul.
\newblock An introduction to variational methods for graphical models.
\newblock {\em Machine learning}, 37(2):183--233, 1999.

\bibitem{kingma2013auto}
D.~P. Kingma and M.~Welling.
\newblock Auto-encoding variational bayes.
\newblock {\em arXiv preprint arXiv:1312.6114}, 2013.

\bibitem{lindenHandbookItemResponse2017}
W.~J. v.~d. Linden.
\newblock {\em Handbook of Item Response Theory: Volume 2: Statistical Tools}.
\newblock CRC Press, 2017.

\bibitem{liu2018fitting}
C.-W. Liu and R.~P. Chalmers.
\newblock Fitting item response unfolding models to likert-scale data using
  mirt in r.
\newblock {\em PloS one}, 13(5), 2018.

\bibitem{magdalena2016ratings}
L.~Magdalena, E.~Haman, S.~A. Lotem, B.~Etenkowski, F.~Southwood,
  D.~Andjelkovic, E.~Bloom, T.~Boerma, S.~Chiat, P.~E. de~Abreu, et~al.
\newblock Ratings of age of acquisition of 299 words across 25 languages: Is
  there a cross-linguistic order og words?
\newblock {\em Behavior Research Methods (print Edition)}, 48(3):1154--1177,
  2016.

\bibitem{mcdonald2000basis}
R.~P. McDonald.
\newblock A basis for multidimensional item response theory.
\newblock {\em Applied Psychological Measurement}, 24(2):99--114, 2000.

\bibitem{mnih2014neural}
A.~Mnih and K.~Gregor.
\newblock Neural variational inference and learning in belief networks.
\newblock {\em arXiv preprint arXiv:1402.0030}, 2014.

\bibitem{natesan2016bayesian}
P.~Natesan, R.~Nandakumar, T.~Minka, and J.~D. Rubright.
\newblock Bayesian prior choice in irt estimation using mcmc and variational
  bayes.
\newblock {\em Frontiers in psychology}, 7:1422, 2016.

\bibitem{neal1994improved}
R.~M. Neal.
\newblock An improved acceptance procedure for the hybrid monte carlo
  algorithm.
\newblock {\em Journal of Computational Physics}, 111(1):194--203, 1994.

\bibitem{neal2011mcmc}
R.~M. Neal et~al.
\newblock Mcmc using hamiltonian dynamics.
\newblock {\em Handbook of markov chain monte carlo}, 2(11):2, 2011.

\bibitem{rasch1960studies}
G.~Rasch.
\newblock Studies in mathematical psychology: I. probabilistic models for some
  intelligence and attainment tests.
\newblock 1960.

\bibitem{ravitch1995national}
D.~Ravitch.
\newblock {\em National standards in American education: A citizen's guide.}
\newblock ERIC, 1995.

\bibitem{reckase2009multidimensional}
M.~D. Reckase.
\newblock Multidimensional item response theory models.
\newblock In {\em Multidimensional item response theory}, pages 79--112.
  Springer, 2009.

\bibitem{rezende2014stochastic}
D.~J. Rezende, S.~Mohamed, and D.~Wierstra.
\newblock Stochastic backpropagation and approximate inference in deep
  generative models.
\newblock {\em arXiv preprint arXiv:1401.4082}, 2014.

\bibitem{settles2018second}
B.~Settles, C.~Brust, E.~Gustafson, M.~Hagiwara, and N.~Madnani.
\newblock Second language acquisition modeling.
\newblock In {\em Proceedings of the thirteenth workshop on innovative use of
  NLP for building educational applications}, pages 56--65, 2018.

\bibitem{singh2017gradescope}
A.~Singh, S.~Karayev, K.~Gutowski, and P.~Abbeel.
\newblock Gradescope: a fast, flexible, and fair system for scalable assessment
  of handwritten work.
\newblock In {\em Proceedings of the fourth (2017) acm conference on learning@
  scale}, pages 81--88. ACM, 2017.

\bibitem{sinharay2006posterior}
S.~Sinharay, M.~S. Johnson, and H.~S. Stern.
\newblock Posterior predictive assessment of item response theory models.
\newblock {\em Applied Psychological Measurement}, 30(4):298--321, 2006.

\bibitem{vaswani2017attention}
A.~Vaswani, N.~Shazeer, N.~Parmar, J.~Uszkoreit, L.~Jones, A.~N. Gomez,
  {\L}.~Kaiser, and I.~Polosukhin.
\newblock Attention is all you need.
\newblock In {\em Advances in neural information processing systems}, pages
  5998--6008, 2017.

\bibitem{wainwright2008graphical}
M.~J. Wainwright, M.~I. Jordan, et~al.
\newblock Graphical models, exponential families, and variational inference.
\newblock {\em Foundations and Trends{\textregistered} in Machine Learning},
  1(1--2):1--305, 2008.

\bibitem{webb2018faithful}
S.~Webb, A.~Golinski, R.~Zinkov, S.~Narayanaswamy, T.~Rainforth, Y.~W. Teh, and
  F.~Wood.
\newblock Faithful inversion of generative models for effective amortized
  inference.
\newblock In {\em Advances in Neural Information Processing Systems}, pages
  3070--3080, 2018.

\bibitem{williams1992simple}
R.~J. Williams.
\newblock Simple statistical gradient-following algorithms for connectionist
  reinforcement learning.
\newblock {\em Machine learning}, 8(3-4):229--256, 1992.

\bibitem{wu2018multimodal}
M.~Wu and N.~Goodman.
\newblock Multimodal generative models for scalable weakly-supervised learning.
\newblock In {\em Advances in Neural Information Processing Systems}, pages
  5575--5585, 2018.

\bibitem{yeung2019deep}
C.-K. Yeung.
\newblock Deep-irt: Make deep learning based knowledge tracing explainable
  using item response theory.
\newblock {\em arXiv preprint arXiv:1904.11738}, 2019.

\bibitem{zhang2017dynamic}
J.~Zhang, X.~Shi, I.~King, and D.-Y. Yeung.
\newblock Dynamic key-value memory networks for knowledge tracing.
\newblock In {\em Proceedings of the 26th international conference on World
  Wide Web}, pages 765--774, 2017.

\end{thebibliography}

\end{document}